\newtheorem{lemma}{Lemma}
\title{Rectified Decision Trees: Towards Interpretability, \\
 Compression and Empirical Soundness}
\author{
Jiawang Bai$^1$\thanks{equal contribution.}
\and
Yiming Li$^{2\, *}$\and
Jiawei Li$^{1}$\and
Yong Jiang$^{2}$\And
Shutao Xia$^1$
\affiliations
$^1$Graduate School at Shenzhen, Tsinghua University, China\\
$^2$Tsinghua-Berkeley Shenzhen Institute, Tsinghua University, China
\emails
baijw1020@gmail.com,
\{li-ym18, li-jw15\}@mails.tsinghua.edu.cn,
\{jiangy, xiast\}@sz.tsinghua.edu.cn
}
\begin{document}

\maketitle
\begin{abstract}
How to obtain a model with good interpretability and performance has always been an important research topic. In this paper, we propose rectified decision trees (ReDT), a knowledge distillation based decision trees rectification with high interpretability, small model size, and empirical soundness. Specifically, we extend the impurity calculation and the \textit{pure} ending condition of the classical decision tree to propose a decision tree extension that allows the use of soft labels generated by a well-trained teacher model in training and prediction process. It is worth noting that for the acquisition of soft labels, we propose a new multiple cross-validation based method to reduce the effects of randomness and overfitting. These approaches ensure that ReDT retains excellent interpretability and even achieves fewer nodes than the decision tree in the aspect of compression while having relatively good performance. Besides, in contrast to traditional knowledge distillation, back propagation of the student model is not necessarily required in ReDT, which is an attempt of a new knowledge distillation approach.  Extensive experiments are conducted, which demonstrates the superiority of ReDT in interpretability, compression, and empirical soundness. 
\end{abstract}

\section{Introduction}
Random forests is a typical ensemble learning method, where a large number of randomized decision trees are constructed and the results from all trees are combined for the final prediction of the forest. Since its introduction in \cite{breiman2001}, random forests and its several variants \cite{friedman2001,chen2016} have been widely used in many fields, such as deep learning \cite{Zhou2017,feng2018} and even outlier detection \cite{liu2008}. In addition to its application, its theoretical properties have also been extensively studied. \cite{denil2014,scornet2015}.

However, although those complicated algorithms, such as random forests and GBDT, reach great success in many aspects, this high prediction performance makes considerable sacrifices of interpretability. The essential procedures of ensemble approaches cause this decline. For example, comparing to decision trees, the bootstrap and voting process of random forests makes the predictions much more difficult to explain. On the contrary, the decision trees are known to have the best interpretability among all machine learning algorithms yet with relatively lousy performance. Besides, forest-based algorithms or even deep neural networks (DNN) usually require much larger storage than decision trees, which is unacceptable especially when the model is set on a personal device with strict storage limitations (such as a cellular device). This conflict between empirical soundness and interpretability with flexible storage continuously drives researchers.

To address these problems, using a tree ensemble to generate additional samples for the further construction of the decision tree is proposed by Breiman \cite{breiman1996}, which can be regarded as the first attempt for this problem. In \cite{meinshausen2010}, Node Harvest is introduced to simplify tree ensemble by using the shallow parts of the trees. The shortcoming of Node Harvest is that the simplified derived model is still an ensemble and therefore the challenge of interpretation remains. Recently, a distillation-based method is proposed, where the soften labels are generated by well-trained DNN to create a more understandable model in the form of a soft decision trees \cite{frosst2017}. However, since this method relies on the backpropagation of the soft decision trees, it cannot be used in the classical decision trees. Besides, the interpretability of the soft decision trees \cite{irsoy2012} is much weaker than the classical decision trees. 

In this paper, we propose rectified decision trees (ReDT), a knowledge distillation based decision trees rectification with high interpretability, empirical soundness and even has a smaller model size compared to the decision trees. The critical difference between ReDT and decision tree lies in the use of softening labels, which is the weighted average of soft labels (the output probability vector of a well-trained teacher model) and hard labels, in the process of building trees. Specifically, to construct a decision tree, the hard label is mainly involved in the two parts of the tree construction process: (1) calculating the change of impurity and (2) determining whether the node is \emph{pure} in the stopping condition. In our method, we introduce soften labels into these processes. Firstly, we calculate the average of the soften labels in the node. The proportion of the samples with $i$-th category needed in the calculation of impurity criterion is re-determined using the value of the $i$-th dimension of the soften label. Secondly, since it is almost impossible for the mixed labels of all samples in a node to be the same, we propose to use a pseudo-category which is corresponding to the soften label of the sample. Then the original stopping condition can remain. In ReDT, the teacher model can be DNN or any other classification algorithm and therefore ReDT is universal. In contrast to traditional knowledge distillation, back propagation of the student model is not necessarily required in ReDT, which can be regarded as an attempt of a new knowledge distillation approach. Besides, we propose a new multiple cross-validation based method to reduce the effects of randomness and overfitting.

The main contributions of this paper can be stated as follows: 1) We propose a decision trees extension, which is the first tree that allows training and predicting using soften labels; 2) The first universal back propagation-free distillation framework is proposed and 3) the empirical analysis of its mechanism is conducted; 4) We propose a new soften labels acquisition method based on multiple cross-validations to reduce the effects of randomness and overfitting; 5) Extensive experiments demonstrate the superiority of our approach in interpretability, compression, and empirical soundness.

\section{Related Work}
The interpretability of complex machine learning models, especially ensemble approaches and deep learning, has been widely concerned. At present, the most widely used machine learning models are mainly forest-based algorithms and DNN, so their interpretability is of great significance. There are a few previous studies on the interpretability of forest-based algorithms. The first work is done by Breiman \cite{breiman1996}, who propose to use tree ensemble to generate additional samples for the further construction of a single decision tree. In \cite{meinshausen2010}, Node harvest is proposed to simplify tree ensembles by using the shallow parts of the trees. Considering the simplification of tree ensembles as a model selection problem, and using the Bayesian method for selection is also proposed in \cite{hara2018}. The interpretability research of DNN mainly on three aspects: visualizing the representations in intermediate layers of DNN \cite{zeiler2014,zhou2018}, representation diagnosis \cite{yosinski2014,zhang2018} and build explainable DNNs \cite{chen2016b,sabour2017}. Recently, a knowledge distillation based method is provided, which uses a trained DNN to create a more explainable model in the form of soft decision trees \cite{frosst2017}. 

The compression of forest-based algorithms and DNN has also received extensive attention. A series of work focuses on \emph{pruning} techniques for forest-based algorithms, whose idea is to reduce the size by removing redundant components while maintaining the predictive performance \cite{quinlan1993,ren2015,nan2016}. The idea of \emph{pruning} is also widely used in the compression of DNN \cite{han2015,he2017}. Recently, extensive researches have been conducted on compression methods based on coding or quantization. \cite{han2015a,painsky2016}.

Recently, Knowledge distillation has been widely accepted as a compression method. The concept of knowledge distillation in the teacher-student framework by introducing the teacher’s softened output is first proposed in \cite{hinton2015}. Since then, a series of improvements and applications of knowledge distillation have been proposed \cite{romero2015,yim2017}. At present, almost all knowledge distillation focus on the compression of DNN and require the back-propagation of the student model. Besides, using knowledge distillation to distill DNN into a soften decision tree to achieve great interpretability and compressibility is recently proposed in \cite{frosst2017}. This method can be regarded as the first attempt to apply knowledge distillation to interpretability.

\section{The Proposed Method}
We present the rectified decision trees (ReDT) in this section. The main concepts of our proposed method are how to define the important information of the teacher model (distilled knowledge) and how we use it in training the student model ($i.e.$ the ReDT). Section \ref{DK} introduces the distilled knowledge that we further used in the construction of ReDT. Section \ref{TC} and \ref{Pre} discuss the specific construction and prediction process of ReDT. An empirical analysis, which demonstrates why soften labels can reach better performance than hard labels in the construction of the decision tree, is provided in section \ref{EA}. 

\subsection{Distilled Knowledge}\label{DK}
Let $\mathcal{D}_n$ represents a data set consisting of $n$ $i.i.d.$ observations. Each observation has the form $(\bm{X},Y)$, where $\bm{X} \in \mathbb{R}^D$ represents the $D$-dimensional features and $Y \in \{1, \cdots, K\}$ is the corresponding label of the observation. The label of a sample can be regarded a single sampling from a $K$-dimensional discrete distribution. Let \textbf{hard label} $\bm{y_{hard}}$ denotes the one-hot representation of the label. ($K$-dimensional vector, where the value in the dimension corresponding to the category is 1 and the rest are all 0). 

From the perspective of probability, the training of the model can be considered as an approximation of the distribution of data. It is extremely difficult to recover the true distribution of $(X, Y)$ from the hard labels directly. In contrast, the output of a well-trained model consists of a significant amount of useful information compared to the original hard label itself. Inspired by this idea, we define the \textbf{soft label} $\bm{y_{soft}}$, which is the output probability vector of a well-trained model such as DNN, random forests and GBDT, as the distilled knowledge from the teacher model. This idea is also partly supported by \cite{hinton2015} where he used a softened version of the final output of a teacher network to teach information to a small student network.

Once a well-trained teacher model is given, the generation of the soft label is straightforward by directly outputting the probability of all training samples. However, the acquisition of teacher model is usually needed through training. The most straightforward idea is to train the teacher model using all training samples and output the soft label of those samples. However, the soft label obtained through this process has relatively poor quality due to the effects of randomness and overfitting. This problem does not exist in the previous knowledge distillation task since their training is carried out simultaneously rather than strictly one after the other, thanks for the teacher model and the student model can both be trained through back propagation. To address this problem, we propose a multiple cross-validation based methods to calculate soft labels. Specifically, if 5 times 5-fold cross validation is implemented, we first randomly divide the training set into five similarly sized sets, then using four sets of data for training, and the other set of data to predict ($i.e.$ output its soft label). In each time, each sample is predicted once so that each sample will end up with 5 soft labels. And the final soft label is the average of all its predictions.

\subsection{ReDT Construction}\label{TC}
In the proposed ReDT, comparing to the original decision tree, there are two main alterations including the calculation of impurity decrease and the stopping condition. In our method, we introduce soften label into these processes.

Note that instead of using the soft label of samples directly, we use the \textbf{mixed label} $\bm{y_{mixed}}$, which is the weighted average of soft label and hard label with weight hyperparameter $\alpha \in [0, 1]$. That is,
\begin{equation}\label{mixed}
    \bm{y_{mixed}} = \alpha \bm{y_{hard}} + (1-\alpha) \bm{y_{soft}}.
\end{equation}

The hyperparameter $\alpha$ plays a role in regulating the proportion of using the soft label. The larger $\alpha$, the smaller the proportion of the soft label in the mixed label. When $\alpha = 1$, the ReDT becomes Breiman's decision trees. The purpose of using mixed labels is to consider that the soft label may have a certain degree of error. By adjusting the hyperparameter $\alpha$, we can obtain the soften label with sufficient information and relative accuracy.

Recall that in the classification problem, the impurity decrease caused by splitting point $v$ is denoted by
\begin{equation}\label{1}
    I(v) = T(\mathcal{D}) - \frac{|\mathcal{D}_l|}{|\mathcal{D}|}T(\mathcal{D}_l)
    -\frac{|\mathcal{D}_r|}{|\mathcal{D}|}T(\mathcal{D}_r),
\end{equation}
where $\mathcal{D}_{l}, \mathcal{D}_{r}$ are two children sets generated by $\mathcal{D}$ splitting at $v$, $T(\cdot)$ is the impurity criterion ($e.g.$ Shannon entropy or Gini index). The first alteration in ReDT is the probability $p_i$, which implies the proportion of the samples with $i$-th category, used in calculating the impurity decrease of a splitting point. Specifically, since each sample uses a soften label instead of a hard label, we calculate the average of the soften labels of all the samples in the node and finally obtain a $K$-dimensional vector. At this time, $p_i$ is redetermined as the value of the $i$-th dimension of that vector. In other words, let $\bm{y_{mixed}^{(j)}}=(y_{j1},y_{j2},\cdots,y_{jK})$ denotes the mixed label of $j$-th training sample. $p_i$ of node $\mathcal{N}$ is calculated by 
\begin{equation}
    p_i=\frac{1}{|\mathcal{N}|}\sum_{\bm{y_{mixed}^{(j)}}\in \mathcal{N}} y_{ji},
\end{equation}
where $|\mathcal{N}|$ denotes the number of samples in leaf node $\mathcal{N}$.

The second alteration is how to define \emph{pure} in the stopping condition. In the training process of original decision trees, if all samples in a node have a single category, the node is considered to be pure. At this point, the stopping condition is reached, and this node is no longer to split. However, in the ReDT, it is almost impossible for the mixed labels of all samples to be the same. Therefore, we use the category corresponding to the maximum probability in the mixed label of the sample as its pseudo-category $y_{pseudo}$, $i.e.$,
\begin{equation}\label{pseudo}
    y_{pseudo}=\arg \max \bm{y_{mixed}},    
\end{equation}
and then determining whether to continue to split based on original stopping condition with it. 

\begin{algorithm}[ht]
   \caption{The training process of ReDT: $ReDT()$ }
   \label{alg-dt}
\begin{algorithmic}[1]
   \STATE {\bfseries Input:} Training set $\mathcal{D}=\left\{(\bm{X},\bm{y_{mixed}})\right\}$ calculated according to (\ref{mixed}) and minimum leaf size $k$.
   \STATE {\bfseries Output:} The rectified decision tree $T$.
   \STATE Calculate pseudo-category $y_{pseudo}$ of each sample in $\mathcal{D}$ by (\ref{pseudo}).
   \STATE Determine whether the node is pure based on whether each sample in $\mathcal{D}$ has the same pseudo-category.
        \IF{$|X|>k$ and the node is not pure}
        \STATE Calculate the impurity decrease vector $I$ according to equation (\ref{1}).
        \STATE Select the splitting point with maximum impurity reduction criterion. 
        \STATE The training set $\mathcal{D}$ correspondingly split into two child nodes, called $\mathcal{D}_{l}, \mathcal{D}_{r}$.
        \STATE $T.leftchild \leftarrow ReDT(\mathcal{D}_l, k)$
        \STATE $T.rightchild \leftarrow ReDT(\mathcal{D}_r, k)$
        \ENDIF
\STATE {\bfseries Return:} $T$.
\end{algorithmic}
\end{algorithm}

\subsection{Prediction}\label{Pre}
Once the ReDT has grown based on the mixed label as described above, the predictions for a newly given sample can be made as follows.

Suppose the unlabeled sample is $\bm{x}$ and the predicted label and predicted discrete probability distribution of that sample is $\hat{y}$ and $\bm{P}=(\hat{p_1}, \cdots, \hat{p_K})$ respectively.

According to a series of decisions, $\bm{x}$ will eventually fall into a leaf node, assuming that node is $V$. The predicted distribution of $\bm{x}$ is the average of the mixed labels of all training samples falling into the leaf nodes $V$, i.e.,

\begin{equation}
    \bm{P}=(\hat{p_1}, \cdots, \hat{p_K}) = \frac{1}{|V|} \sum_{\bm{y_{mixed}^{(i)}} \in V} \bm{y_{mixed}^{(i)}},
\end{equation}
where $|V|$ denotes the number of samples in leaf node $V$.

The predicted label of $\bm{x}$ is the one with biggest probability in $\bm{P}$:
\begin{equation}
    \hat{y}= \arg \max_i \hat{p_i}.
\end{equation}

\subsection{Empirical Analysis}\label{EA}
The reason why soften labels rather than hard labels should be used can be further demonstrated from the perspective of the calculation of impurity and distribution approximation. The specific analyses are as follows:

\begin{lemma}[Integer Partition Lemma]\label{l1}
Suppose there is an integer $N$, which is the sum of $K$ integers $n_1, \cdots, n_K$, i.e., 
$$
N=n_1+n_2,+\cdots+n_k.
$$
There are totally $C_{n+k-1}^{k-1}=\frac{(n+k-1)!}{(k-1)!n!}$ possible values for the ordered pair $(n_1, \cdots,n_K)$.
\end{lemma}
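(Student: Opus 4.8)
The plan is to recognize this as the classical ``stars and bars'' counting problem and to prove it by exhibiting an explicit bijection between the set of admissible tuples and a set of strings with a fixed number of separators. First I would fix the convention that each $n_i$ ranges over the nonnegative integers, since this is the regime in which the claimed formula $C_{N+K-1}^{K-1}$ holds (the positive-integer variant would instead give $C_{N-1}^{K-1}$). Thus the object to be counted is the set $S = \{(n_1,\ldots,n_K) : n_i \ge 0,\ \sum_i n_i = N\}$, and the goal is to show $|S| = C_{N+K-1}^{K-1}$.

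The core step is to construct a bijection $\varphi$ from $S$ to the collection of arrangements of $N$ identical ``stars'' and $K-1$ identical ``bars'' in a line. Given a tuple $(n_1,\ldots,n_K)$, I would map it to the string consisting of $n_1$ stars, then a bar, then $n_2$ stars, then a bar, and so on, ending with $n_K$ stars; this string has exactly $N$ stars and $K-1$ bars, hence length $N+K-1$. Conversely, from any such arrangement I recover a tuple by reading off the number of stars lying in each of the $K$ blocks delimited by the $K-1$ bars. I would then verify that these two maps are mutually inverse, which establishes that $\varphi$ is a bijection.

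Finally I would count the target set directly: an arrangement is completely determined by choosing which $K-1$ of the $N+K-1$ positions hold the bars (the remaining positions automatically hold stars), and there are exactly $C_{N+K-1}^{K-1}$ such choices. Combining this with the bijection yields $|S| = C_{N+K-1}^{K-1}$, as claimed.

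I expect the only genuine subtlety to be the well-definedness and injectivity of the bijection in the boundary cases, namely when some $n_i = 0$ (adjacent bars with no stars between them) or when empty blocks occur at the two ends of the string. I would handle this by emphasizing that stars and bars are treated as an ordered sequence of symbols, so that empty blocks are perfectly legitimate and the correspondence remains one-to-one. As an alternative route, in case a purely combinatorial argument is deemed less rigorous, I could instead prove the identity by induction on $K$ (splitting on the value of $n_K$ and invoking the hockey-stick identity), or via the generating function $(1-x)^{-K} = \sum_{N \ge 0} C_{N+K-1}^{K-1}\, x^N$, reading off the coefficient of $x^N$.
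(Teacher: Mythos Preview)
Your proposal is correct and follows the same stars-and-bars idea as the paper: the paper's proof simply asserts that the problem is equivalent to choosing $K-1$ positions out of $N+K-1$, which is exactly the bijection you spell out in detail. Your write-up is considerably more careful than the paper's one-line sketch, but the underlying argument is identical.
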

\begin{proof}
This problem is equivalent to picking $k-1$ locations randomly from $n+k-1$ locations. The result is trivial based on the basics of number theory. 
\end{proof}

Lemma \ref{l1} indicates that for a $K$-classification problem, if the node $\mathcal{N}$ contains $N$ samples, then the impurity of this node has at most $C_{n+k-1}^{k-1}$ possible values. In other words, compared to soften label, the use of hard label limits the precision of the impurity of the nodes. This limitation has a great adverse effect on the selection of the split point, especially when the number of samples is relatively small.

From another perspective, the improvement brought by soft labels is since it is tough to recover the distribution of $(X, Y)$ with hard labels directly, especially when the number of samples is relatively small. However, once the relatively correct soften a well-trained teacher model provides labels, a large amount of information of the distribution is contained in it. The use of this information about the distribution makes the decision surface offset towards the real position compared to when using the hard label.

\subsection{Comparision between DT, SDT and ReDT}
Although both soft decision trees (SDT) and ReDT are the extension of DT, there are many differences between them. In this section, we compare DT, SDT and ReDT from five aspects including (1) interpretability, (2) empirical soundness, (3) back-propagation needed, (4) soften label allowed and (5) compression, as shown in Table \ref{Comparison}. The method that satisfies the aspect is marked by \checkmark.

It is worth noting that interpretability, empirical soundness, and compression are relative. For example, the interpretability of SDT is stronger than DNN but is much weaker than DT and ReDT. Besides, since back propagation of the student model is not necessarily required in ReDT, this new knowledge distillation approach can be easily extended to other model and preserving running efficiency.

\begin{table}[ht]
\caption{Comparison between DT, SDT and ReDT.}
\vskip -0.1in
\label{Comparison}
\begin{center}
\begin{small}
\begin{sc}
\begin{tabular}{|l|c|c|c|}
\hline
            & DT & SDT & \upshape{ReDT} \\ \hline
\upshape{Interpretability}   & \checkmark   &   & \checkmark      \\ \hline
\upshape{Empirical soundness}   &  & \checkmark & \checkmark \\ \hline
\upshape{Back-propagation needed}  &   & \checkmark  &   \\  \hline
\upshape{Soften label allowed}  &   & \checkmark       & \checkmark      \\ \hline
\upshape{Compression (small model size)}  &  \checkmark  &  & \checkmark     \\ \hline
\end{tabular}
\end{sc}
\end{small}
\end{center}
\vskip -0.15in
\end{table}

\section{Experiments}
\subsection{Configuration}
For the DNN configurations, the experiments were conducted on the benchmark dataset MNIST \cite{lecun1998}. All networks were trained using Adam and an initial learning rate of 0.1. The learning rate was divided by 10 after epochs 20 and 40 (50 epochs in total). We examine a variety of DNN architectures including MLP, LeNet-5, and VGG-11, and use ReLU for activation function, cross-entropy for loss function. The MLP has two hidden layers, with 784 and 256 units respectively and dropout rate 0.5 for hidden layers. Besides, the temperature used in generating soft labels in DNN is set to 4 as suggested in \cite{hinton2015}.

\begin{table}[ht]
\caption{Datasets description.}
\vskip -0.1in
\label{table1}
\begin{center}
\begin{small}
\begin{sc}
\begin{tabular}{lccc}
\toprule
Data set & Class & Features & Instances   \\
\midrule
adult      & 2  & 14  & 48842 \\
crx        & 2  & 15  & 690   \\
EEG        & 2  & 15  & 14980 \\
bank       & 2  & 17  & 45211 \\
german     & 2  & 20  & 1000  \\
cmc        & 3  & 9   & 1473  \\
connect-4  & 3  & 42  & 67557 \\
land-cover & 9  & 147 & 675   \\
letter     & 26 & 15  & 20000 \\
isolet     & 26 & 617 & 7797  \\
\bottomrule
\end{tabular}
\end{sc}
\end{small}
\end{center}
\vskip -0.15in
\end{table}

All datasets involved in the evaluation of forest-based teacher are obtained from the UCI repository \cite{UCI}. Their information are listed in Table \ref{table1}. 
Besides, $70\%$ data is used for training and other $30\%$ is used for testing. Here we use random forests (RF) and GBDT as the teacher model. They are the representative of the bagging and boosting method in forest-based teacher respectively. We determine the value of $\alpha$ by grid search in a step of 0.1 in the range $[0,1]$ and the implement of GBDT is refer on scikit-learn platform \cite{pedregosa2011}. The number of trees contained in both random forest and GBDT is all set to 100. Besides, the performance of the decision tree trained with hard labels is also provided as a benchmark. Compared with the classical decision tree, since the soft decision tree is more like a tree-shaped neural network and with much weaker interpretability, it is not compared as a benchmark in experiments. The Gini index was used in RF, DT and ReDT as the impurity measure and minimum leaf size $k=5$ is set for both RF, GBDT, DT, and ReDT as suggested in \cite{breiman2001}.

Besides, 5 times 5-fold cross-validation is used to calculate the soft label of the training set and Wilcoxons signed-rank test \cite{demvsar2006} is carried out to test for difference between the results of the ReDT and those of decision trees at significance level 0.05. Compared with decision trees, ReDT with better performance (higher accuracy or fewer number of nodes) is indicated in boldface. Those that had a statistically significant difference from the decision tree are marked with "$\bullet$". Besides, we carried out the experiment 10 times to reduce the effect of randomness.

\begin{table*}[ht]
\caption{Comparison of test accuracy of different forests-based teacher model.}
\label{table2}
\begin{center}
\begin{threeparttable}
\vskip -0.1in
\begin{small}
\begin{sc}
\begin{tabular}{lccccccc}
\toprule
Dataset & RF & GBDT & DT & {\upshape ReDT}(RF) & {\upshape ReDT}(GBDT) & $\alpha^{*}$(RF) & $\alpha^{*}$(GBDT)\\
\midrule
adult      & 86.54\% & 86.53\% & 81.86\% & \textbf{86.18\%}$^{\bullet}$ & \textbf{86.16\%}$^{\bullet}$ & 0.01         & 0.06         \\
crx        & 86.14\% & 86.09\% & 80.51\% & \textbf{85.46\%}$^{\bullet}$ & \textbf{84.40\%}$^{\bullet}$ & 0.08       & 0.11           \\
EEG        & 81.50\% & 90.58\% & 82.88\% & \textbf{83.02\%} & \textbf{83.01\%} & 0.24       & 0.52         \\
bank       & 90.38\% & 90.41\% & 87.60\% & \textbf{90.11\%}$^{\bullet}$ & \textbf{90.15\%}$^{\bullet}$ & 0.06         & 0.03         \\
german     & 76.60\% & 76.13\% & 68.37\% & \textbf{73.40\%}$^{\bullet}$ & \textbf{72.67\%}$^{\bullet}$ & 0.07       & 0.10           \\
cmc        & 55.15\% & 55.66\% & 48.31\% & \textbf{55.05\%}$^{\bullet}$ & \textbf{55.41\%}$^{\bullet}$ & 0         & 0           \\
connect-4  & 75.38\% & 77.58\% & 71.73\% & \textbf{76.69\%}$^{\bullet}$ & \textbf{76.02\%}$^{\bullet}$ & 0.30       & 0.30         \\
land-cover & 83.69\% & 83.80\% & 76.55\% & \textbf{77.59\%} & \textbf{77.14\%} & 0.54       & 0.37         \\
letter     & 91.56\% & 93.61\% & 85.65\% & \textbf{86.01\%} & \textbf{86.15\%} & 0.9       & 0.9         \\
isolet     & 93.68\% & 93.32\% & 79.83\% & \textbf{81.40\%}$^{\bullet}$ & \textbf{81.77\%}$^{\bullet}$ & 0.57       & 0.33         \\
\bottomrule
\end{tabular}
\end{sc}
\end{small}
\begin{tablenotes}
 \item $^{\bullet}$: ReDT is better than decision trees at a level of significance 0.05.
 \item $\alpha^{*}$: The average of all best $\alpha$ for each experiment.
\end{tablenotes}
\vskip -0.1in
\end{threeparttable}
\end{center}
\end{table*}

\begin{table*}[ht]
\caption{Comparison of the number of nodes of forests-based teacher distillation.}
\label{table3}
\begin{center}
\begin{threeparttable}
\vskip -0.1in
\begin{small}
\begin{sc}
\begin{tabular}{lccccc}
\toprule
Dataset    & RF     & GBDT  & DT    & {\upshape ReDT}(RF)      & {\upshape ReDT}(GBDT)    \\
\midrule
adult      & 244832 & 1486  & 7869  & \textbf{2286}$^{\bullet}$ & \textbf{2023}$^{\bullet}$ \\
crx        & 6191   & 1336  & 103   & \textbf{48}$^{\bullet}$   & \textbf{65}$^{\bullet}$   \\
EEG        & 125289 & 1489  & 1858  & 1948          & 1939          \\
bank       & 223906 & 1470  & 4302  & \textbf{1678}$^{\bullet}$ & \textbf{1603}$^{\bullet}$ \\
german     & 11063  & 1404  & 227   & \textbf{140}$^{\bullet}$  & \textbf{172}$^{\bullet}$  \\
cmc        & 16220  & 4206  & 630   & \textbf{202}$^{\bullet}$  & \textbf{275}$^{\bullet}$  \\
connect-4  & 470261 & 4426  & 18152 & \textbf{8813}$^{\bullet}$ & \textbf{8740}$^{\bullet}$ \\
land-cover & 6100   & 9492  & 85    & \textbf{43}$^{\bullet}$   & \textbf{49}$^{\bullet}$   \\
letter     & 168101 & 38631 & 2752  & \textbf{2464}$^{\bullet}$ & \textbf{2459}$^{\bullet}$ \\
isolet     & 58905  & 32751 & 707   & \textbf{464}$^{\bullet}$  & \textbf{593}$^{\bullet}$  \\
\bottomrule
\end{tabular}
\end{sc}
\end{small}
\begin{tablenotes}
 \item $^{\bullet}$: ReDT is better than decision trees at a level of significance 0.05.
\end{tablenotes}
\end{threeparttable}
\end{center}
\vskip -0.15in
\end{table*}

\subsection{DNNs Teacher}
We discuss the performance including test accuracy (ACC) and the number of nodes (NODE) of ReDT under different teacher models and compare it with DT and its teacher model in this section.

\begin{table}[ht]
\caption{Comparison on MNIST.}
\vskip -0.1in
\label{mnist}
\begin{center}
\begin{small}
\begin{sc}
\begin{tabular}{|l|c|c|c|}
\hline
            & MLP & {\upshape LeNet-5} & VGG-11 \\ \hline
ACC (DNN)   & 98.33\%   & 99.42\%       & 99.49\%      \\ \hline
ACC (DT)    & \multicolumn{3}{c|}{87.55\%} \\ \hline
NODE (DT)   & \multicolumn{3}{c|}{5957} \\ \hline
ACC ({\upshape ReDT})   & \textbf{88.21\%} & \textbf{88.57\%}$^{\bullet}$ & \textbf{88.53\%}$^{\bullet}$ \\ \hline
Node ({\upshape ReDT})  & \textbf{5361}$^{\bullet}$    & \textbf{5173}$^{\bullet}$    & \textbf{5803}$^{\bullet}$    \\  
\hline
\end{tabular}
\end{sc}
\end{small}
\begin{tablenotes}
 \item $^{\bullet}$: ReDT is better than DT at a level of significance 0.05.
\end{tablenotes}
\end{center}
\vskip -0.1in
\end{table}

As shown in Table \ref{mnist}, although there is still a gap in ACC between ReDT and its teacher model since decision tree cannot learn the spatial relationships among the raw pixels, the ReDT have a remarkable improvement comparing to the original decision tree. Not to mention that in terms of compression, ReDT even has fewer nodes than DT (and therefore has a smaller model size).

\subsection{Forests-based Teacher}
Table \ref{table2} and \ref{table3} shows the test accuracy of different forest-based teacher model and the number of nodes respectively. Regardless of which teacher model is used, the ReDT has a remarkable improvement in both efficiency and compression. Among all ten data sets, ReDT has higher test accuracy than the decision tree, and this improvement is significant on seven of those data sets. Specifically, ReDT has achieved an increase of almost 5\% accuracy compared to DT on half of the data sets. In particular, in the three data sets (Band, ADULT, and CONNECT-4), ReDT has similar performance to its teacher model. Also, the value of optimal $\alpha$ seems to have some direct connection with the number of categories in the dataset. Specifically, data sets with more categories (such as LAND-COVER, ISOLET, and LETTER) generally have a larger optimal $\alpha$. In other words, a large proportion of hard label needs to be included in the mixed label to give ReDT excellent performance. Two reasons may cause this:  1) The more categories, the more likely the soft labels will contain more error information; 2) The more categories, the higher the interference caused by error information contained in soft labels. Regardless of the reason, the number of categories of samples can be used to provide the initial intuition of $\alpha$. In terms of compression, in nine of ten data sets, the number of nodes in ReDT is smaller than the decision tree. In other words, ReDT has a smaller model size than the decision tree, not to mention the teacher models, such as random forests and GBDT, which is usually more complicated. Overall, ReDT with the forest-based teacher has achieved a significant improvement in both performance and compression.

\begin{figure*}[ht]
\centering
\includegraphics[scale=0.57]{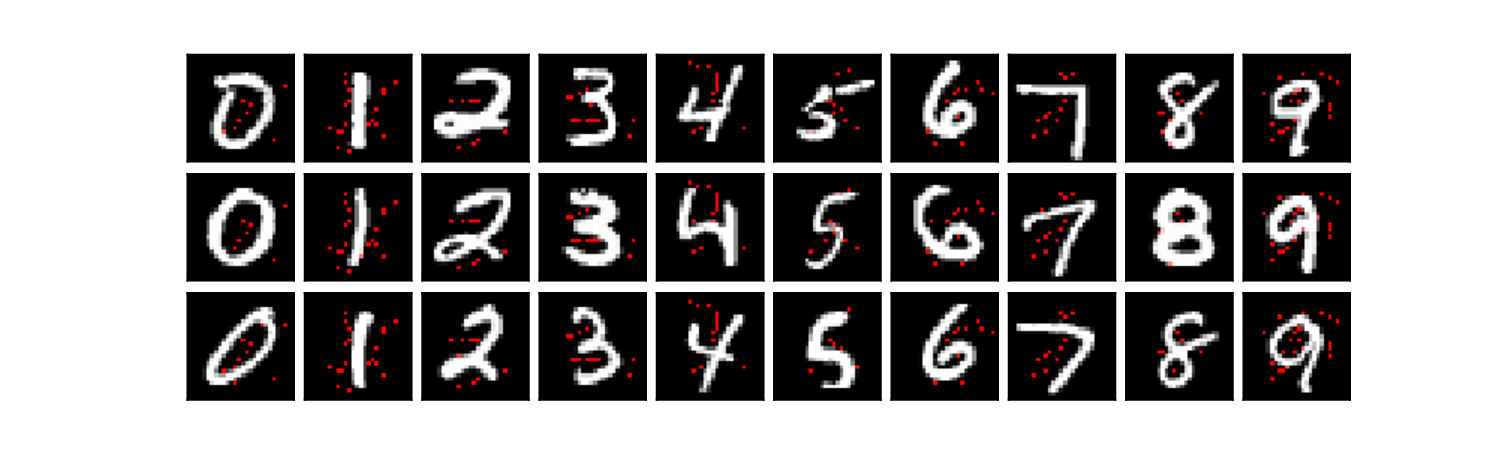}
\vskip -0.2in
\caption{Visualization of key pixels for MNIST image classification. (The key pixels are marked in red.)}
\label{interpretability}
\vskip -0.1in
\end{figure*}

\subsection{Discussion}
In this section, We discuss the compression, interpretability, and the impact of hyperparameters $\alpha$ on the model.

\subsubsection{Compression}
As we mentioned above, ReDT is an extension of a decision tree, and therefore the size of its model can still be measured by the number of nodes. Without loss of generality, we compare ReDT with Decision tree here. There are two advantages for such comparison: 1) The decision tree is almost the model requires the fewest number of nodes in the forest-based algorithms, not to mention its size is much smaller than the DNN or other complex algorithms. If the model has a relatively smaller size compared to the decision tree, then it must have excellent compression; 2) The size of the decision tree and ReDT model are both reflected by the number of nodes, which is convenient for comparison.

Without loss of generality, we use random forest and GBDT as the teacher model here. The compression rate $(\, \rm{Node (ReDT)}/\rm{Node(DT)}\, )$ of multiple data sets under different hyperparameter $\alpha$ as shown in Fig. \ref{Compression}. The smaller the compression rate, the smaller the model size of ReDT. 

It can be seen that the compression rate of almost every dataset is less than 1, which indicates that for all hyperparameter $\alpha$, ReDT has a smaller model size than DT in most cases. In addition, as the hyperparameter $\alpha$ increases, the compression rate has an upward trend. This is caused by the fact that the soft label carries a large amount of information about the distribution, whether it is correct or not, thus facilitating the decision tree to divide the data. The smaller the $\alpha$, the more significant the proportion of the soft label in the mixed label, therefore the smaller the size of the model. Thus, although there is no $\alpha$ such that it can correspond to the highest test accuracy on all datasets (because this is closely related to complex factors such as the correctness of the soft label, dataset, etc.), using $\alpha$ to adjust the size of the model is a good choice. Besides, as shown in the figure, the growth of the compression rate of the data set with more categories is significantly slower. Regardless of the reason, this opposite tendency between compression rate and accuracy (the more the categories, the larger the $\alpha^{*}$) allows ReDT to have a smaller model size when achieving empirical soundness.

\begin{figure}[!htb]
\vskip -0.15in
\centering
\subfigure[]{
\label{figa}
\includegraphics[width=4cm]{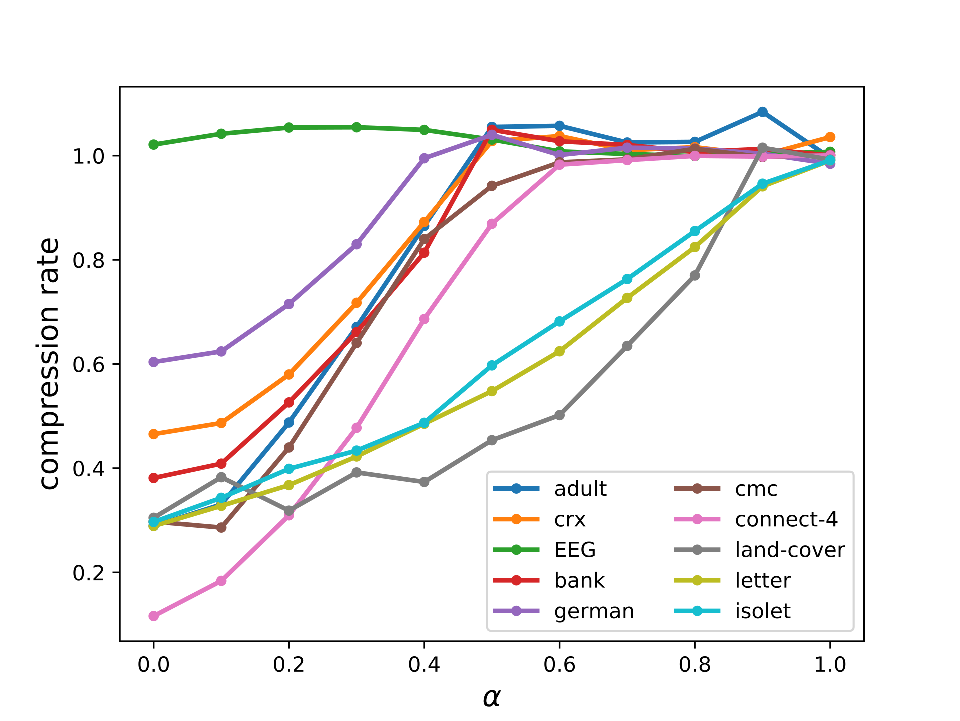}}
\subfigure[]{
\label{figb} 
\includegraphics[width=4cm]{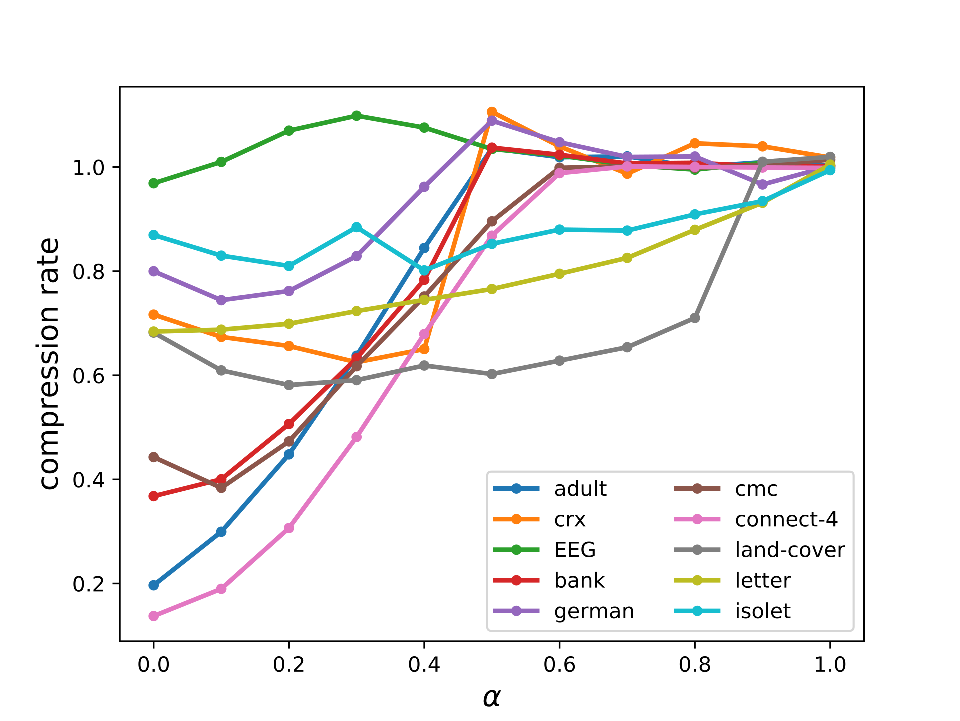}}
\vskip -0.15in
\caption{Compression rate under different teacher model. (a) Random forests teacher; (b) GBDT teacher.}
\label{Compression}
\vskip -0.1in
\end{figure}

\subsubsection{Interpretability}
The decision tree makes the prediction depending on the leaf node to which the input $\bm{x}$ belongs. The corresponding leaf node is determined by traversing the tree from the root. Although its path can represent the decision of the model, when the sample is in high dimensions, especially when it is a picture or speech, a single category will have a large number of different decision paths, and therefore it is difficult to explain the output by simply listing its path. To address this problem, we propose to highlight the key features (pixels) used in the sample's decision path.

Here, we use MNIST as an example to demonstrate the powerful interpretability of ReDT. We randomly select three samples for each number to predict. The pixels contained in its decision path, the key pixels, are marked in red, as shown in Fig. \ref{interpretability}. Although we don't have the ground true decision path, since the key pixel is almost the outline of the number, so the prediction is with highly interpretability and confidence.

\section{Conclusion}
By recognizing that the key of learning process lies in the approximation of data distribution, in this paper, we attempt to endow the great approximation ability of other teacher models to decision tree inspired by knowledge distillation and propose the ReDT method. Experiments and comparisons demonstrate that the ReDT remarkably surpasses original decision tree, and its performance is relatively competitive to its teacher model. More importantly, while having good performance, ReDT retains the excellent interpretability of the decision tree and even achieves smaller model size than the decision tree. Besides, in contrast to traditional knowledge distillation, back propagation of the student model is not necessarily required in ReDT, which can be regarded as an attempt of a new knowledge distillation approach. This new knowledge distillation method can be easily extended to other models.


\clearpage

\bibliographystyle{named}
\bibliography{ijcai19}

\end{document}